\documentclass[11pt]{article}
\usepackage[preprint]{acl}
\usepackage{times}
\usepackage{latexsym}
\usepackage[T1]{fontenc}
\usepackage[utf8]{inputenc}
\usepackage{microtype}
\usepackage{inconsolata}

\usepackage{algorithm}
\usepackage{algorithmic}
\usepackage{booktabs}       % professional-quality tables
\usepackage{amsfonts}       % blackboard math symbols
\usepackage{nicefrac}       % compact symbols for 1/2, etc.
\usepackage{xcolor}         % colors
\usepackage{graphicx}
\usepackage{subfigure}
\usepackage{amsmath}
\usepackage{amssymb}
\usepackage{mathtools}
\usepackage{amsthm}
\usepackage{tcolorbox}
\usepackage{colortbl}
\usepackage{pifont} % For check and cross marks
\usepackage{multirow}
\usepackage[capitalize,noabbrev]{cleveref}
\usepackage{enumitem}

\theoremstyle{plain}
\ifx\theorem\undefined
\newtheorem{theorem}{Theorem}

\newtheorem{proposition}{Proposition}

\theoremstyle{definition}
\newtheorem{definition}{Definition}

\DeclareRobustCommand\onedot{\futurelet\@let@token\@onedot}
\def\eg{\emph{e.g., }}

\title{LADR: Locality-Aware Dynamic Rescue for Efficient Text-to-Image Generation with Diffusion Large Language Models}

\author{
\textbf{Chenglin Wang\textsuperscript{1,*}},
\textbf{Yucheng Zhou\textsuperscript{2,*}},
\textbf{Shawn Chen\textsuperscript{3}},
\textbf{Tao Wang\textsuperscript{4}},
\textbf{Kai Zhang\textsuperscript{1, $\dagger$}}
\\
 \textsuperscript{1}East China Normal University,
 \textsuperscript{2}University of Macau \\
 \textsuperscript{3}Zhejiang University,
 \textsuperscript{4}Nanjing University
\\
\texttt{52275901013@stu.ecnu.edu.cn},~~
\texttt{yucheng.zhou@connect.um.edu.mo} \\
\texttt{kzhang980@gmail.com}
}

\begin{document}
\maketitle
\begingroup
\renewcommand\thefootnote{}
\footnotetext{$^*$ Equal Contributions.}
\footnotetext{$^\dagger$ Corresponding Author.}
\endgroup
\begin{abstract}
Discrete Diffusion Language Models have emerged as a compelling paradigm for unified multimodal generation, yet their deployment is hindered by high inference latency arising from iterative decoding. Existing acceleration strategies often require expensive re-training or fail to leverage the 2D spatial redundancy inherent in visual data. To address this, we propose \textbf{Locality-Aware Dynamic Rescue (LADR)}, a training-free method that expedites inference by exploiting the spatial Markov property of images. LADR prioritizes the recovery of tokens at the ``generation frontier'', regions spatially adjacent to observed pixels, thereby maximizing information gain. Specifically, our method integrates morphological neighbor identification to locate candidate tokens, employs a risk-bounded filtering mechanism to prevent error propagation, and utilizes manifold-consistent inverse scheduling to align the diffusion trajectory with the accelerated mask density. Extensive experiments on four text-to-image generation benchmarks 
demonstrate that our LADR achieves an approximate \textbf{4$\times$ speedup} over standard baselines. Remarkably, it maintains or even enhances generative fidelity, particularly in spatial reasoning tasks, offering a state-of-the-art trade-off between efficiency and quality.
\end{abstract}

\section{Introduction}
The field of generative modeling has witnessed a paradigm shift with the rapid evolution of Discrete Diffusion Language Models (DLMs)~\cite{sahoo2024simple,nie2025large,xin2025lumina}. Unlike Autoregressive (AR) models~\cite{radford2018improving, touvron2023llama,achiam2023gpt,zhoucondition,songbroad,zhou2025medical,zhou2024less}
that generate sequences strictly left-to-right, or Continuous Diffusion Models~\cite{ho2020denoising,rombach2022high,liuflow,yang2025self} that operate in continuous latent or pixel space, DLMs formulate visual generation~\cite{wang2025complexbench,zhou2025draw,esser2021taming} as a bidirectional masked modeling task within a discretized vector-quantized (VQ) latent space. This paradigm not only enables flexible, non-sequential generation orders but also facilitates unified multimodal understanding and generation within a single framework~\cite{li2025lavida,you2025llada}. By treating visual patches as discrete tokens akin to text, DLMs have achieved impressive scalability and fidelity, emerging as a powerful competitor to traditional paradigms.

\begin{figure}[t]
    \centering
    \includegraphics[width=0.99\linewidth]{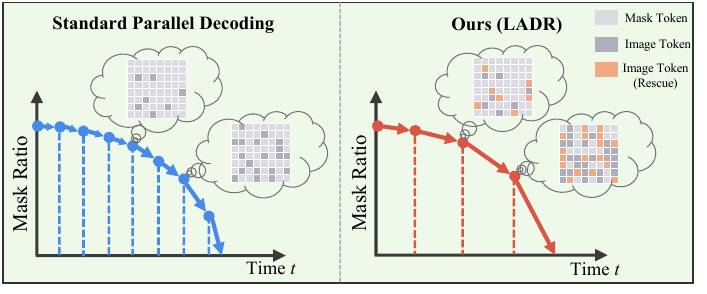}
    \caption{Comparison between Standard Parallel Decoding and our LADR method.
    While standard parallel decoding follows a fixed schedule, LADR 
    accelerates decoding by exploiting spatial locality to dynamically recover neighbor tokens and keeps generation quality.}
    \label{fig:intro}
    \vspace{-2mm}
\end{figure}

However, the iterative nature of DLMs imposes a severe bottleneck on inference efficiency. High-fidelity generation typically requires 50 to 100 forward passes to progressively refine the noisy sequence. Unlike AR models that benefit from KV-caching mechanisms to reuse historical computations, masked diffusion models must re-calculate bidirectional attention interactions at every step. While acceleration techniques exist, they often fall short in practicality: distillation-based methods~\cite{zhu2025di, zhu2025soft} require computationally expensive re-training and student-teacher alignment, limiting their flexibility. On the other hand, heuristic strategies borrowed from textual Masked Language Models (MLMs)~\cite{li2025diffusion,ye2025dream} often fail to generalize to the visual domain, as they overlook the fundamental difference between 1D textual dependencies and 2D visual structures.

Our work addresses this inefficiency by exploiting a property intrinsic to images but largely ignored in standard parallel decoding: \textit{Spatial Locality}. 
As illustrated in Fig.~\ref{fig:intro}, standard decoding schedules~\cite{chang2022maskgit, you2025llada} (\eg Cosine) assume isotropic uncertainty reduction, treating all masked tokens as independent variables. In contrast, we observe that images exhibit a strong spatial Markov property, the uncertainty of a pixel is significantly reduced if its immediate spatial neighbors are known. Based on this insight, we hypothesize that the most efficient decoding path is not random, but topological. By prioritizing the ``generation frontier'', the unmasked tokens spatially adjacent to observed regions, we can accelerate the transition from noise to structure.  

To materialize this insight, we propose \textbf{Locality-Aware Dynamic Rescue (LADR)}, a training-free acceleration method tailored for discrete visual generation. LADR dynamically modifies the decoding trajectory through three coupled mechanisms. First, it employs morphological operations to identify the topological neighbors of the current generation frontier. Second, to prevent the ``hallucination'' risks associated with aggressive acceleration, we introduce a risk-bounded filtering mechanism derived from the confidence gap of the model's posterior. Finally, to address the distribution shift caused by rapid mask reduction, we devise a \textit{Manifold-Consistent Inverse Scheduling} strategy that re-aligns the diffusion timesteps with the actual mask density, ensuring the denoiser operates within its trained support.

We extensively validate LADR on multiple comprehensive benchmarks, including GenEval~\cite{ghosh2023geneval},  UniGenBench~\cite{wang2025pref}, DPG-Bench~\cite{hu2024ella}, and T2I-CompBench~\cite{huang2023t2i}. Experimental results demonstrate that LADR significantly outperforms standard sampling and existing acceleration baselines. Notably, our method achieves an approximate \textbf{4$\times$ speedup} (reducing inference time from $\sim$57s to $\sim$13s) without compromising generative quality. In tasks requiring spatial reasoning (\eg object positioning and counting), LADR even surpasses the baseline performance, suggesting that enforcing spatial contiguity during decoding acts as a beneficial inductive bias.

In summary, our contributions are as follows:
\begin{itemize}[leftmargin=*, itemsep=1pt, topsep=1pt, partopsep=1pt, parsep=1pt]
    \item We identify \textit{spatial locality} as a critical but underutilized source of information gain in discrete diffusion, theoretically showing that topology-aware decoding minimizes conditional entropy more effectively than random selection.
    \item We propose \textbf{LADR}, a plug-and-play acceleration method that integrates morphological neighbor identification, theoretically grounded risk filtering, and inverse scheduling to safely expedite inference without re-training.
    \item We achieve state-of-the-art efficiency-quality trade-offs on widely adopted benchmarks, demonstrating that LADR can accelerate large-scale multimodal DLMs by $4\times$ while maintaining robust semantic alignment and visual fidelity.
\end{itemize}

\section{Related Work}
\label{sec:related_work}
Discrete Diffusion Language Models (DLMs) cast image generation as iterative masked token recovery in a discretized VQ space, enabling parallel decoding and substantially fewer sampling steps than continuous diffusion~\cite{sahoo2024simple,nie2025large,song2025seed,arriola2025block,ho2020denoising,rombach2022high,chen2025towards,yang2025hicogen,yang2025dc}. Initiated by MaskGIT~\cite{chang2022maskgit}, this framework has been extended by Paella~\cite{rampas2022novel} and Muse~\cite{chang2023muse} to improve robustness and semantic control, and more recently generalized to unified multimodal generation by modeling visual and textual tokens as a single sequence~\cite{you2025llada,swerdlow2025unified,xin2025lumina,li2025lavida}. Despite these advances, masked discrete diffusion remains latency-bound due to its reliance on iterative refinement with bidirectional attention and dynamically changing masks, which precludes computation reuse and contrasts sharply with KV-cached autoregressive decoding~\cite{li2024snapkv,bai2023qwen,guo2025deepseek,cai2024pyramidkv}. Distillation-based acceleration methods compress multi-step diffusion trajectories~\cite{hinton2014distilling,SongD0S23,deschenaux2025beyond}, but adapting consistency-style objectives to discrete VQ spaces is non-trivial and typically requires expensive retraining or relaxation techniques~\cite{zhu2025di,zhu2025soft}. In parallel, training-free acceleration heuristics have shown promise in text diffusion and sequence models~\cite{wu2025fast2,hu2025accelerating,wu2025fast,wang2025diffusion,li2025diffusion,israel2025accelerating}, yet their direct transfer to image generation remains limited, as they fail to explicitly exploit the strong 2D spatial locality inherent in visual tokens. The extended version can be found in Appendix~\ref{app:related_work}.

\begin{figure*}[!t]
\centering
\includegraphics[width=0.99\linewidth]{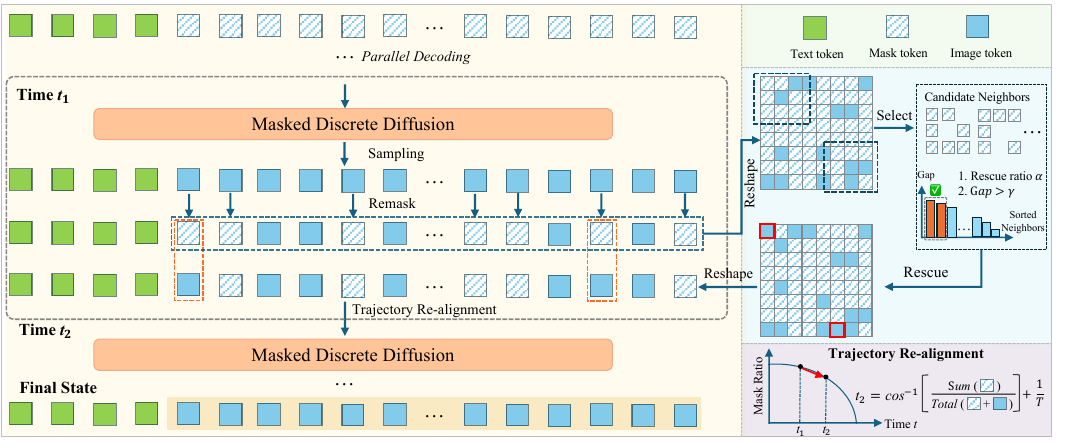}
\caption{Overview of the LADR method. At each timestep, the flattened discrete tokens were reshaped into a 2D grid to identify candidate neighbors adjacent to resolved regions. These candidates are evaluated using the \textit{Confidence Margin} (confidence top1-top2 gap) and are dynamically "rescued" (unmasked) based on an adaptive rescue ratio $\alpha$ and threshold $\gamma$.  To synchronize the generation timeline with this accelerated accumulation of tokens, the \textit{Trajectory Re-alignment} module utilizes an inverse cosine function to re-calculate the effective timestep $t_2$, allowing the scheduler to skip redundant iterations while maintaining consistency.}
\label{figure-1}
\end{figure*}

\section{Methodology}
\label{sec:method}

As illustrated in Fig.~\ref{figure-1}, we proposed \textbf{Locality-Aware Dynamic Rescue (LADR)}, a method designed to accelerate Discrete Diffusion Language Models (DLMs) while preserving generation quality. 
Our approach is grounded in the observation that standard parallel decoding typically treats tokens as independent variables given the global context. However, image representations derived from convolutional encoders inherently exhibit strong \textit{spatial locality}.
In this section, we first formalize the generation process and then provide the theoretical motivation grounded in information theory and risk estimation to drive our three algorithmic components: morphological neighbor identification, risk-bounded filtering, and manifold-consistent inverse scheduling.

\subsection{Preliminaries: Discrete Diffusion and Variational Bounds}
\label{subsec:prelim}
Let $\mathbf{z}_0 = [z_{0,1}, \dots, z_{0,N}] \in \mathcal{V}^N$ represent a discrete image sequence flattened from a $H \times W$ feature map, where each token belongs to a codebook $\mathcal{V}$. The discrete diffusion process is a forward Markov chain $q(\mathbf{z}_t | \mathbf{z}_{t-1})$ that progressively corrupts $\mathbf{z}_0$ by replacing tokens with a special $[\texttt{MASK}]$ token. The marginal distribution at time $t \in [0, 1]$ is given by:
\begin{align}
    &q(\mathbf{z}_t | \mathbf{z}_0) = \prod_{i=1}^N q(z_{t,i} | z_{0,i}), \notag\\
    &\text{where } q(z_{t,i} = \texttt{[MASK]} | z_{0,i}) = \gamma(t),
    \label{eq:forward_process}
\end{align}
where $\gamma(t)$ is a monotonic masking schedule (\eg cosine) representing the probability of a token being masked. The reverse process $p_\theta(\mathbf{z}_{0} | \mathbf{z}_t)$ approximates the true posterior $q(\mathbf{z}_0 | \mathbf{z}_t)$. The training objective is to minimize the negative Evidence Lower Bound (ELBO), which simplifies to the negative log-likelihood over masked regions $\mathcal{M}_t$:
\begin{align}
    \mathcal{L} \approx \mathbb{E}_{t, \mathbf{z}_0} \left[ - \sum_{i \in \mathcal{M}_t} \log p_\theta(z_{0,i} | \mathbf{z}_{t, \mathcal{O}_t}) \right],
    \label{eq:elbo_loss}
\end{align}
where $\mathcal{O}_t$ denotes the set of observed indices. During inference, iterative decoding approximates the joint distribution via conditional independence assumption: $p_\theta(\mathbf{z}_0 | \mathbf{z}_t) \approx \prod_{i \in \mathcal{M}_t} p_\theta(z_{0, i} | \mathbf{z}_{t, \mathcal{O}_t})$. Standard acceleration methods strictly follow $\gamma(t)$, discarding potentially correct predictions in early stages.

\subsection{Theoretical Motivation}
\label{subsec:theory}
Instead of relying on heuristic acceleration, we formulate LADR by analyzing the entropy reduction and risk bounds within the discrete latent space.

\subsubsection{Entropy Reduction via Local Information Gain}
Standard decoding assumes isotropic uncertainty reduction. However, since discrete image tokens $\mathbf{z}$ are typically obtained via CNN-based encoders (\eg VQGAN), the dependency between tokens decays with spatial distance due to bounded \textit{Effective Receptive Fields (ERFs)}.
We quantify the uncertainty of a masked token $z_i$ using Conditional Entropy $H(z_i | \mathbf{z}_{\mathcal{O}})$. The reduction in uncertainty gained by observing an auxiliary set $\mathcal{S}$ is quantified by the Conditional Mutual Information:
\begin{align}
    I(z_i; \mathbf{z}_{\mathcal{S}} | \mathbf{z}_{\mathcal{O}}) = H(z_i | \mathbf{z}_{\mathcal{O}}) - H(z_i | \mathbf{z}_{\mathcal{O}}, \mathbf{z}_{\mathcal{S}}).
\end{align}

\begin{definition}[Generation Frontier]
Given a binary mask $\mathbf{M}$, the generation frontier $\mathcal{F}$ is defined as the set of masked tokens spatially adjacent to currently observed tokens:
$\mathcal{F} = \{ i \mid m_i = 1 \land \exists j \in \mathcal{N}(i), m_j = 0 \}$, where $\mathcal{N}(i)$ is the local spatial neighborhood.
\end{definition}

\begin{proposition}[Locality-Driven Information Lower Bound]
\label{prop:entropy}
Given the spatial inductive bias of the encoder, the mutual information between a token $z_i$ and its immediate neighborhood $\mathcal{N}(i)$ dominates that of distant context $\mathcal{S}_{dist}$. Formally:
\begin{align}
    I(z_i; \mathbf{z}_{\mathcal{N}(i)} | \mathbf{z}_{\mathcal{O}}) \gg I(z_i; \mathbf{z}_{\mathcal{S}_{dist}} | \mathbf{z}_{\mathcal{O}}).
\end{align}
\end{proposition}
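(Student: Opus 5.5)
The "$\gg$" in Proposition~\ref{prop:entropy} cannot be proved from the preliminaries alone; it needs a quantitative model of the encoder's spatial inductive bias. I will adopt the one suggested by the discussion of Effective Receptive Fields. Assume the token field $\mathbf{z}$, conditioned on $\mathbf{z}_{\mathcal{O}}$, is a Markov random field on the $H\times W$ grid graph whose dependencies decay with distance. Concretely, I assume a \emph{strong spatial mixing} condition. For any site $i$, any set $A$ at graph distance $d(i,A)\ge r$, and any conditioning configuration,
\begin{align}
\bigl\| p(z_i \mid \mathbf{z}_{\mathcal{O}}, \mathbf{z}_A) - p(z_i \mid \mathbf{z}_{\mathcal{O}}, \mathbf{z}'_A) \bigr\|_{\mathrm{TV}} \le C\,|A|\,e^{-r/\xi},
\end{align}
with correlation length $\xi$ on the order of the ERF radius. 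I also assume a \emph{local non-degeneracy} condition: the neighborhood carries a fixed amount $\kappa>0$ of information,
\begin{align}
I(z_i;\mathbf{z}_{\mathcal{N}(i)}\mid \mathbf{z}_{\mathcal{O}})\ge\kappa .
\end{align}
With these assumptions, the statement becomes the precise claim that $I(z_i;\mathbf{z}_{\mathcal{S}_{dist}}\mid\mathbf{z}_{\mathcal{O}}) \le \kappa\cdot \epsilon(r)$ with $\epsilon(r)\to 0$ exponentially. Here $r$ is the distance from $i$ to $\mathcal{S}_{dist}$, which I take to be $\gg \xi$.

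The key steps, in order, are as follows.

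\textbf{Step 1.} Use the Markov property to reduce to a screening statement. Conditioning on a separating shell $B$ around $i$ makes $z_i$ independent of $\mathbf{z}_{\mathcal{S}_{dist}}$. Then the data-processing inequality and the chain rule give
\begin{align}
I(z_i;\mathbf{z}_{\mathcal{S}_{dist}}\mid\mathbf{z}_{\mathcal{O}})\le I(z_i;\mathbf{z}_{B}\mid\mathbf{z}_{\mathcal{O}}).
\end{align}
Taking $B$ to be the shell at distance about $r$ keeps this bound from being vacuous.

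\textbf{Step 2.} Bound the mutual information by an expected divergence:
\begin{align}
I(z_i;\mathbf{z}_{\mathcal{S}_{dist}}\mid\mathbf{z}_{\mathcal{O}})=\mathbb{E}\,D_{\mathrm{KL}}\bigl(p(z_i\mid\mathbf{z}_{\mathcal{O}},\mathbf{z}_{\mathcal{S}_{dist}})\,\|\,p(z_i\mid\mathbf{z}_{\mathcal{O}})\bigr).
\end{align}
The second argument is a mixture over $\mathbf{z}_{\mathcal{S}_{dist}}$, so strong spatial mixing bounds the total-variation distance by $\delta = C|\mathcal{S}_{dist}|e^{-r/\xi}$. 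A reverse-Pinsker / $\chi^2$ bound then turns this into a KL bound of order $\delta^2/p_{\min}$. Here $p_{\min}$ is a lower bound on the conditional probabilities over $\mathcal{V}$, which a softmax-parameterized posterior supplies.

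\textbf{Step 3.} Combine with the non-degeneracy assumption to get the ratio
\begin{align}
\frac{I(z_i;\mathbf{z}_{\mathcal{S}_{dist}}\mid\mathbf{z}_{\mathcal{O}})}{I(z_i;\mathbf{z}_{\mathcal{N}(i)}\mid\mathbf{z}_{\mathcal{O}})} \le \frac{C^2|\mathcal{S}_{dist}|^2 e^{-2r/\xi}}{\kappa\, p_{\min}} .
\end{align}
This ratio is exponentially small once $r\gg\xi$, which is the intended meaning of $\gg$.

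I expect the main obstacle to be the modeling step rather than the information-theoretic inequalities. The paper's justification via VQGAN/CNN ERFs constrains the encoder map from pixels to tokens. It does not directly imply spatial mixing of the token distribution conditioned on $\mathbf{z}_{\mathcal{O}}$: long-range semantic correlations in natural images, such as symmetry or repeated objects, can violate it. A further complication is the factor $|\mathcal{S}_{dist}|$, which grows with the size of the distant set. I would first handle this by summing the decay over shells at distances $r, r+1,\dots$. Shell sizes grow only linearly in 2D, so the series converges to $O(r\,e^{-r/\xi})$. If that fails, I would state the proposition explicitly as conditional on the mixing assumption, with $\xi$ estimated empirically from the ERF.
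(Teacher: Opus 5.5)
Your argument is sound under the hypotheses you state, but it takes a genuinely different route from the paper's justification in Appendix~\ref{proof:lemma1}.

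The paper works with a Gaussian surrogate. It posits $\mathrm{Cov}(z_u,z_v)\propto\exp(-d(u,v)^2/2\sigma^2)$ with $\sigma$ tied to the ERF, and uses the Gaussian formula $H(z_i\mid z_j)\approx\frac{1}{2}\log(1-\rho_{ij}^2)+\mathrm{const}$. It then compares one neighbor $j$ against one distant token $k$ via $\rho_{ij}>\rho_{ik}\Rightarrow H(z_i\mid z_j)<H(z_i\mid z_k)$, and asserts the set-level inequality $I(z_i;\mathbf{z}_{\mathcal{N}(i)})>I(z_i;\mathbf{z}_{\mathcal{S}_{dist}})$. That route is short, and it gets the ordering from a single second-moment decay assumption, with no separate non-degeneracy hypothesis like your $\kappa$. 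But it treats discrete codes as Gaussian and silently drops the conditioning on $\mathbf{z}_{\mathcal{O}}$. It also passes from pairs to sets without controlling $|\mathcal{S}_{dist}|$, although many weakly correlated tokens can jointly carry substantial information. Finally, it yields only ``$>$'' rather than ``$\gg$''.

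Your route is a discrete field with strong spatial mixing, a total-variation bound on the mixture $p(z_i\mid\mathbf{z}_{\mathcal{O}})$, and then $D_{\mathrm{KL}}\le\chi^2$. It keeps the conditioning, handles sets, and produces an explicit exponentially small ratio, which is an honest formalization of ``$\gg$''. The price is a much stronger hypothesis on full conditional laws, uniform over conditioning configurations. As you correctly observe, encoder ERFs do not imply it.

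Two tightenings are available within your own framework.

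\emph{Use Step~1.} Your Step~1 is never used downstream. Applying Step~2 to the separating shell $B$ instead of to $\mathcal{S}_{dist}$ replaces $|\mathcal{S}_{dist}|$ by $|B|=O(r)$. That already removes the complication you flag at the end.

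\emph{Drop $p_{\min}$.} The $1/p_{\min}$ factor is poor: softmax positivity concerns the model's conditionals rather than the law defining the mutual information, and $p_{\min}\le 1/|\mathcal{V}|$ is tiny for a large codebook. The Fannes--Audenaert continuity bound instead gives $I(z_i;\mathbf{z}_{\mathcal{S}_{dist}}\mid\mathbf{z}_{\mathcal{O}})\le\delta\log(|\mathcal{V}|-1)+h(\delta)$, with $h$ the binary entropy. This needs no positivity assumption and is still exponentially small in $r$.
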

\textit{Remark.}
This proposition provides the theoretical justification for our \textbf{Morphological Neighbor Identification} strategy: by prioritizing the generation frontier $\mathcal{F}$, LADR maximizes the expected information gain per decoding step, guiding sampling along the local
structure of the latent manifold. This locality assumption is also empirically illustrated in Fig.~\ref{fig:figure3}, where perturbing a small number of VQ tokens induces only spatially confined changes in the decoded image, while the global structure remains largely intact.

\subsubsection{Safety Guarantee via Margin Bounds}
Accelerating generation involves ``rescuing'' tokens before their scheduled timestamp. To control the quality, we must bound the probability of misclassification. We employ the \textit{Confidence Gap}, $\Delta_i = p_{(1)} - p_{(2)}$, where $p_{(1)}$ and $p_{(2)}$ are the top-1 and top-2 probabilities.

\begin{theorem}[Margin-based Error Bound]
\label{thm:risk_bound}
Consider a classification task over $K$ classes. If the predicted distribution satisfies a confidence margin $\Delta \ge \tau$, the probability of error $P(\mathcal{E})$ is strictly upper bounded. Specifically, in the worst-case distribution scenario:
\begin{align}
    P(\mathcal{E}) \le 1 - \left( \frac{1 + \tau}{2} \right).
    \label{eq:risk_bound}
\end{align}
\end{theorem}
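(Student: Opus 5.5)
The plan is to read the theorem under its natural interpretation. The ``error'' $\mathcal{E}$ is the event that the true token differs from the top-1 prediction $\hat{y}=\arg\max_k p_k$, where the true label is distributed according to the predicted (calibrated) distribution $\mathbf{p}=(p_1,\dots,p_K)$. Then $P(\mathcal{E}) = 1 - p_{(1)}$, and the claim reduces to a deterministic lower bound on $p_{(1)}$ in terms of the margin $\Delta = p_{(1)}-p_{(2)}$. The ``worst-case distribution scenario'' is the distribution that minimizes $p_{(1)}$ subject to $\Delta\ge\tau$, $\sum_k p_k = 1$, and $p_k\ge 0$.

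The key steps, in order, are as follows.

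\emph{Step 1.} Fix the margin and write $p_{(2)} = p_{(1)}-\Delta$.

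\emph{Step 2.} Use the simplex constraint. Since every $p_{(k)}\le p_{(2)}$ for $k\ge 2$, we have
\begin{align}
1=\sum_k p_{(k)}\ \ge\ p_{(1)}+p_{(2)} = 2p_{(1)}-\Delta.
\end{align}
This gives only an upper bound, so the useful direction has to come from elsewhere. The natural worst case, in which as much mass as possible is kept away from the top class, concentrates all remaining mass on the runner-up: $p_{(1)}+p_{(2)}=1$. In that two-class configuration, $p_{(1)} = (1+\Delta)/2 \ge (1+\tau)/2$, and hence $P(\mathcal{E}) = 1-p_{(1)} \le 1-\frac{1+\tau}{2}$.

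\emph{Step 3.} Justify why this configuration is ``worst case.'' The argument is that any mass placed on classes $k\ge3$ is itself an error mass bounded by $p_{(2)}$ per class. Redistributing that mass to the runner-up preserves the error probability. So, as long as the rival mass is viewed as merged into a single competitor (the binary reduction ``correct vs.\ best alternative''), the effective margin is $p_{(1)} - (1-p_{(1)}) \ge \Delta \ge \tau$. I would therefore formally state the result for the binary reduction, where error means losing to the strongest competitor aggregate, and derive $1-p_{(1)}\le (1-\tau)/2$ directly from $p_{(1)}-(1-p_{(1)})\ge\tau$.

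The main obstacle is Step 3. For genuinely $K$-class distributions, the margin alone does not bound $1-p_{(1)}$ by $(1-\tau)/2$. For example, $p_{(1)}=\tau+\epsilon$ with the remaining mass spread thinly over many classes has small $p_{(2)}$ but error near $1-\tau$. So the stated inequality holds exactly only under the two-class worst case or under an aggregated-competitor margin assumption. The proof must make this modeling assumption explicit (the ``worst-case distribution scenario'' being the binary one), and I would flag that in the fully general $K$-class case one only gets the weaker bound $P(\mathcal{E})\le 1-\tau$, from $p_{(1)}\ge \Delta\ge\tau$.
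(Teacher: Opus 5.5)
Your core computation is the paper's. The appendix also takes the ``worst case'' to be $\sum_{k\ge 3}p_{(k)}\approx 0$, i.e.\ $p_{(1)}+p_{(2)}=1$, and then reads $1\le 2p_{(1)}-\tau$ as $p_{(1)}\ge(1+\tau)/2$. Your Step~2 remark is correct: without that equality one only has $1\ge 2p_{(1)}-\Delta$, which goes the wrong way. Your skepticism is also justified, and your diagnosis is sharper than the paper's. The paper's heuristic, ``to minimize $p_{(1)}$, push the mass onto $p_{(2)}$,'' is backwards. Since every $p_{(k)}$ with $k\ge 2$ is at most $p_{(1)}-\tau$, we get $1\le p_{(1)}+(K-1)(p_{(1)}-\tau)$. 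Hence the sharp bound is $P(\mathcal{E})\le \frac{(K-1)(1-\tau)}{K}$, attained by $p_{(1)}=\frac{1+(K-1)\tau}{K}$ and $p_{(k)}=\frac{1-\tau}{K}$ for $k\ge 2$. This equals $(1-\tau)/2$ only for $K=2$ and tends to your $1-\tau$ as $K\to\infty$. Already $K=3$, $\tau=0.05$ gives error about $0.633>0.475$. So, read as a bound for general $K$-class distributions, the statement is false for $K\ge 3$. The paper's proof is valid only under its unstated restriction to two-class support, which is exactly the caveat you raise.

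What you must fix is your Step~3, which your own counterexample refutes. The claim that the aggregated margin satisfies $p_{(1)}-(1-p_{(1)})\ge\Delta$ is reversed. Because $1-p_{(1)}=\sum_{k\ge 2}p_{(k)}\ge p_{(2)}$, one has $p_{(1)}-(1-p_{(1)})\le\Delta$. Merging the rival mass into the runner-up preserves the error $1-p_{(1)}$ but \emph{lowers} the margin, possibly below $\tau$. The merged configuration may therefore be infeasible, and the argument does not show that the two-class case is the worst one. In fact, for fixed $\tau$ it is the \emph{most favorable} case. So $(1-\tau)/2$ cannot be derived from $\Delta\ge\tau$. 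You must either restrict to $K=2$ (or to distributions supported on two classes), or take the aggregated margin $2p_{(1)}-1\ge\tau$ as the hypothesis, in which case the conclusion is immediate. Drop the redistribution argument and state the $K$-dependent bound above as the correct general result.
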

\textit{Proof.} See Appendix~\ref{proof:thm1}.

\textit{Remark.} This theorem provides a controllable mechanism. By enforcing a dynamic threshold $\tau(t)$, we can theoretically bound the error rate of our acceleration module, ensuring that the rescued tokens satisfy a minimum reliability standard.

\subsubsection{Manifold Consistency via Inverse Scheduling}
A critical challenge in acceleration is the \textbf{Training-Inference Mismatch}. Aggressive rescue reduces the mask ratio $\rho_{act}$ faster than the scheduler $\gamma(t)$ expects, potentially pushing the state out-of-distribution (OOD).

\begin{proposition}[Manifold Consistency Condition]
\label{prop:manifold}
To ensure the input state remains within the support of the trained diffusion manifold, the conditioning timestep $t$ must be re-aligned such that the expected mask density matches the actual observation density:
\begin{align}
    t_{new} = \gamma^{-1}(\rho_{act}) \quad \text{s.t.} \quad \mathbb{E}[\rho(t_{new})] \approx \rho_{act}.
\end{align}
\end{proposition}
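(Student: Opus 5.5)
The plan is to read Proposition~\ref{prop:manifold} as a statement about matching the number of masked tokens to the forward process of Eq.~\eqref{eq:forward_process}, and to prove two things: that the rule $t_{new}=\gamma^{-1}(\rho_{act})$ is well defined, and that it is the choice of conditioning time under which the observed mask count is most typical. First, by Eq.~\eqref{eq:forward_process} each token is masked independently with probability $\gamma(t)$. Hence the number of masked tokens $|\mathcal{M}_t|$ is $\mathrm{Binomial}(N,\gamma(t))$, and the mask density $\rho(t)=|\mathcal{M}_t|/N$ satisfies $\mathbb{E}[\rho(t)]=\gamma(t)$ and $\mathrm{Var}[\rho(t)]=\gamma(t)(1-\gamma(t))/N$. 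The training distribution of Eq.~\eqref{eq:elbo_loss} therefore places the inputs seen at time $t$ on a shell of density $\gamma(t)\pm O(N^{-1/2})$. A Hoeffding bound $P(|\rho(t)-\gamma(t)|>\epsilon)\le 2e^{-2N\epsilon^2}$ makes this precise. With $N=H\times W$ in the hundreds or thousands, this makes ``support of the trained manifold'' concrete: the pair $(\mathbf{z}_t,t)$ lies in the high-probability region only if $|\rho_{act}-\gamma(t)|=O(N^{-1/2})$.

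Second, I will show that, for a given state with actual density $\rho_{act}$, the condition $\mathbb{E}[\rho(t)]=\rho_{act}$ has a unique solution. The assumption needed is that $\gamma$ is continuous and strictly monotone on $[0,1]$ with $\gamma(0)=0$ and $\gamma(1)=1$, which holds for the cosine schedule $\gamma(t)=\cos(\pi(1-t)/2)$ and for similar schedules. The intermediate value theorem then gives existence, and strict monotonicity gives uniqueness, of $t_{new}=\gamma^{-1}(\rho_{act})$, so that $\mathbb{E}[\rho(t_{new})]=\rho_{act}$ holds exactly. The ``$\approx$'' in the statement accounts for the fluctuation of order $N^{-1/2}$ and for discretisation onto the scheduler's grid of timesteps.

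To justify the word ``must'', I will contrast this choice with keeping the scheduled time $t$ after a rescue. Rescue removes extra tokens, so $\rho_{act}<\gamma(t)$ by some gap $\delta$. The concentration bound then shows that the training-time probability of seeing density $\rho_{act}$ at time $t$ is at most $2e^{-2N\delta^2}$, which is exponentially small: the state is out of distribution for the conditioning $t$. At $t_{new}$ the observed density sits at the mode of the Binomial instead. Equivalently, $t_{new}$ maximises the likelihood $\binom{N}{k}\gamma(t)^k(1-\gamma(t))^{N-k}$ of the observed mask count $k=N\rho_{act}$ over $t$. This follows because the Binomial MLE is $\hat\gamma=k/N$, combined with invariance of the MLE under the reparametrisation $t\mapsto\gamma(t)$.

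The main obstacle is that ``support of the trained manifold'' involves more than the mask count. Rescued tokens are spatially clustered at the frontier, whereas the forward process masks positions uniformly at random. Matching density therefore matches only the marginal count statistic, not the full law of mask patterns. I would handle this by stating the proposition explicitly as density matching, which is what the display asserts. I would then remark that, since $q(\mathbf{z}_t\mid\mathbf{z}_0)$ depends on the mask only through its count, timestep conditioning can encode only density. Re-aligning $t$ is thus the best correction available through the timestep alone, and any residual mismatch in the spatial pattern lies outside what the choice of $t$ can fix.
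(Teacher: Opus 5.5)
Your proposal is correct and follows the same route as the paper's appendix derivation. The mask count is $\mathrm{Binomial}(N,\gamma(t))$, so the expected density is $\gamma(t)$; one imposes $\gamma(t_{new})=\rho_{act}$ and inverts using monotonicity of $\gamma$. The paper simply postulates the matching requirement, whereas you justify it and the word ``must'' through Hoeffding concentration, existence and uniqueness via the intermediate value theorem, the maximum-likelihood reading of $t_{new}$, and the observation that the mask count is sufficient for $t$; this is added rigor on the same argument, not a different one.
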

\textit{Remark.} This necessitates our \textbf{Inverse Scheduling} technique, which acts as a temporal projection operator to correct the trajectory after aggressive rescue operations. $\gamma^{-1}$ is the inverse function of $\gamma$.

\begin{figure}
    \centering
    \includegraphics[width=0.99\linewidth]{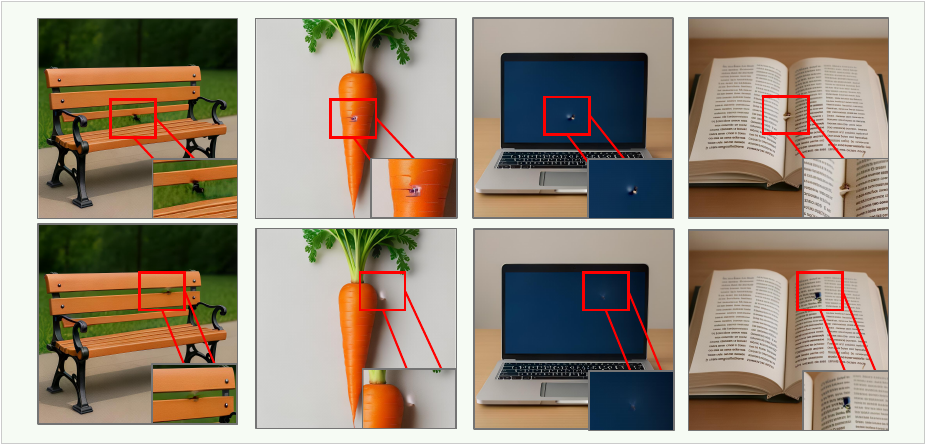}
    \caption{Visualization of localized semantic changes caused by perturbing a small set of VQ tokens. The impact remains spatially confined, supporting the locality assumption that nearby tokens dominate information gain.
}
    \label{fig:figure3}
\end{figure}

\subsection{The LADR Method}
Guided by the theoretical motivations above, LADR dynamically updates the mask tokens $\mathbf{M}_t$ at each timestep $t$ through three coupled steps.

\subsubsection{Morphological Neighbor Identification}
Leveraging Proposition~\ref{prop:entropy}, we aim to identify the frontier $\mathcal{F}$. We map the 1D mask sequence to the 2D spatial grid $\Phi: \{0, 1\}^N \to \{0, 1\}^{H \times W}$. Using a spatial kernel $\mathbf{K}$ (\eg $3 \times 3$), the candidate neighbors $\mathcal{C}_t$ are identified via morphological dilation:
\begin{align}
    \mathbf{M}_{\text{grid}} &= \Phi(\mathbf{M}_t), \\
    \mathbf{M}_{\text{frontier}} &= \mathbf{M}_{\text{grid}} \land (\neg \mathbf{M}_{\text{grid}} \oplus \mathbf{K}), \\
    \mathcal{C}_t &= \{ i \mid \Phi^{-1}(\mathbf{M}_{\text{frontier}})[i] = 1 \}.\label{eq9}
\end{align}
This explicitly selects masked tokens that share spatial connectivity with observed regions.

\subsubsection{Phase-Aware Dynamic Filtering}
For every candidate $i \in \mathcal{C}_t$, we compute the confidence gap $\Delta_i$. Guided by Theorem~\ref{thm:risk_bound}, we employ a dynamic policy $\Pi(t) = (\alpha_t, \tau_t)$ that adapts to the entropy of the generation phase defined by the effective timestep $t_{eff}$:
\begin{itemize}[leftmargin=*]
    \item \textbf{Exploration Phase ($t_{eff} < 0.2$):} Global entropy is high. We apply a strict threshold $\tau=0.05$ and limit the rescue ratio $\alpha=0.1$ to prevent error propagation.
    \item \textbf{Structure Phase ($0.2 \le t_{eff} < 0.7$):} As semantics emerge, we relax constraints ($\tau=0.05, \alpha=0.3$).
    \item \textbf{Refinement Phase ($t_{eff} \ge 0.7$):} We aggressively rescue neighbors ($\tau=\emptyset, \alpha=1.0$) to fill texture details.
\end{itemize}
The set of rescued tokens $\mathcal{R}_t$ is:
\begin{align} \label{eq10}
    \mathcal{R}_t = \text{TopK}_{\Delta}\left( \{ i \in \mathcal{C}_t \mid \Delta_i > \tau_t \}, \lfloor |\mathcal{C}_t| \cdot \alpha_t \rfloor \right).
\end{align}

\subsubsection{Trajectory Re-alignment}
After unmasking $\mathcal{R}_t$, the sequence sparsity decreases to $\rho_{new}$. Crucially, continuing with the original schedule $t$ would violate the manifold consistency (Proposition~\ref{prop:manifold}). We thus re-calculate the next sampling step $t_{next}$ using the inverse schedule:
\begin{align}
    t_{next} &= \text{clamp}\left( \gamma^{-1}(\rho_{new}) + \frac{1}{T}, 0, 1 \right).
\end{align}
This adjustment ensures that the noise level estimates remain accurate, effectively ``skipping'' redundant diffusion steps. The complete procedure is summarized in Algorithm~\ref{alg:ladr}.

\begin{algorithm}[t]\small
\caption{\small Locality-Aware Dynamic Rescue (LADR)}
\label{alg:ladr}
\begin{algorithmic}[1]
\REQUIRE Pre-trained DLM $p_\theta$, Scheduler $\gamma(\cdot)$, Steps $T$
\ENSURE Discrete tokens $\mathbf{z}$
\STATE \textbf{Initialize:} $\mathbf{z} \leftarrow [\texttt{MASK}]^N$, $\mathbf{M} \leftarrow \mathbf{1}^N$
\STATE $N_{total} \leftarrow N$
\FOR{$step = 0$ \TO $T-1$}
    \IF{$\sum \mathbf{M} = 0$} \STATE \textbf{break} \ENDIF
    
    \STATE \COMMENT{\textcolor{blue}{\textit{Step 1: Inverse Scheduling (Prop. 1)}}}
    \STATE $\rho_{curr} \leftarrow (\sum \mathbf{M}) / N_{total}$
    \STATE $t_{eff} \leftarrow \gamma^{-1}(\rho_{curr})$ \quad \textcolor{gray}{// Align time with mask density}\!\!\!\!\!
    \STATE $t_{next} \leftarrow \text{clamp}(t_{eff} + 1/T, 0, 1)$
    \STATE $n_{mask} \leftarrow \lfloor N_{total} \cdot \gamma(t_{next}) \rfloor$ \quad \textcolor{gray}{// Target mask count}
    
    \STATE \COMMENT{\textcolor{blue}{\textit{Step 2: Parallel Prediction}}}
    \STATE $\mathbf{L} \leftarrow p_\theta(\mathbf{z}, t_{next})$
    \STATE $\mathbf{P} \leftarrow \text{Softmax}(\mathbf{L})$
    \STATE $\mathbf{z}_{pred} \leftarrow \text{Sampling}(\mathbf{P})$
    \STATE $\boldsymbol{\Delta} \leftarrow \text{Top1}(\mathbf{P}) - \text{Top2}(\mathbf{P})$
    
    \STATE \COMMENT{\textcolor{blue}{\textit{Step 3: Standard Selection (Global)}}}
    \STATE $\mathcal{I}_{rank} \leftarrow \text{Argsort}(\text{Top1}(\mathbf{P}) \cdot \mathbf{M}, \text{descending})$
    \STATE $\mathbf{M}_{std} \leftarrow \mathbf{1}^N$
    \STATE $\mathbf{M}_{std}[\mathcal{I}_{rank}[n_{mask}:N]] \leftarrow 0$ \quad \textcolor{gray}{// Unmask most confident}
    
    \STATE \COMMENT{\textcolor{blue}{\textit{Step 4: Neighbor Rescue (Lemma 1 \& Thm 1)}}}
    \STATE $\mathbf{M}_{grid} \leftarrow \text{Reshape}(\mathbf{M}_{std}, H, W)$
    \STATE $\mathbf{M}_{front} \leftarrow \mathbf{M}_{std} \land \text{Flatten}(\neg \mathbf{M}_{grid} \oplus \mathbf{K}_{3\times3})$
    \STATE $\mathcal{C}_{neigh} \leftarrow \{i \mid \mathbf{M}_{front}[i] = 1\}$
    
    \IF{$|\mathcal{C}_{neigh}| > 0$}
        \STATE Get $\alpha, \tau$ based on $t_{eff}$ (Sec 3.3.2)
        \STATE $\mathcal{C}_{valid} \leftarrow \{i \in \mathcal{C}_{neigh} \mid \boldsymbol{\Delta}[i] > \tau\}$
        \STATE $k_{res} \leftarrow \min(\lfloor |\mathcal{C}_{neigh}| \cdot \alpha \rfloor, |\mathcal{C}_{valid}|)$
        \STATE $\mathcal{S}_{res} \leftarrow \text{Argsort}(\boldsymbol{\Delta}[\mathcal{C}_{valid}], \text{descending})[0:k_{res}]$\!\!\!\!\!
        \STATE $\mathbf{M}_{std}[\mathcal{S}_{res}] \leftarrow 0$
    \ENDIF
    
    \STATE \COMMENT{\textcolor{blue}{\textit{Step 5: State Update}}}
    \STATE $\mathbf{M} \leftarrow \mathbf{M}_{std}$
    \STATE $\mathbf{z} \leftarrow \mathbf{z}_{pred} \odot (\mathbf{1} - \mathbf{M}) + [\texttt{MASK}] \odot \mathbf{M}$
\ENDFOR
\RETURN $\mathbf{z}$
\end{algorithmic}
\end{algorithm}

\section{Experiments}

\subsection{Experimental setup}
\paragraph{Benchmarks and Baselines.}
To strictly evaluate the effectiveness of our method on both inference efficiency and visual fidelity, we conducted evaluations across four publicly popular text-to-image generation benchmarks: \textbf{GenEval}~\cite{ghosh2023geneval}, \textbf{UniGen-Bench}~\cite{wang2025pref}, \textbf{DPG-Bench}~\cite{hu2024ella}, and \textbf{T2I-CompBench}~\cite{huang2023t2i}. These benchmarks provide a comprehensive assessment spanning from basic object semantics to complex compositional generation. 
Furthermore, to ensure a fair and focused evaluation, we compared our method against two representative training-free acceleration method: (1) \textbf{ML-Cache}~\cite{xin2025lumina}, the native caching optimization strategy embedded in the Lumina-DiMOO backbone; and (2) \textbf{Prophet}~\cite{li2025diffusion}, a heuristic-based accelerated decoding method originally designed for text generation, which we adapted to the visual domain to investigate the cross-modal applicability of textual heuristics.

\paragraph{Implementation Details.}
For a fair and controlable comparison, 
we adopt the unified multimodal model Lumina-DiMOO~\cite{xin2025lumina} as the foundational DLM backbone for all experiments since comparable open-source models are limited.  The generated image resolution is $1024\times 1024$. To ensure a consistent evaluation of inference latency, all models and baselines were executed locally on a single NVIDIA A100 (80GB) GPU. 
We adhered to the standard inference configurations of the backbone model, reporting the performance following the evaluation scripts of each benchmark. Notably, for the UniGen-Bench, we used the version of their released scripts about the open-source vision-language model Qwen2.5-VL-72B~\cite{qwen2.5-VL} to evaluate.
Follow Prophet~\cite{li2025diffusion}, we divided the parallel decoding process into three phases, and set phase-aware thresholds to regulate the rescued neighbors in the decoding process. Much like it established a proof-of-concept for heuristic-based acceleration in text decoding, our work aims to pioneer a similar trajectory for visual decoding. Consequently, we did not perform an exhaustive grid search to obtain these parameters for each specific dataset.

\subsection{Main Results}
\begin{table*}[!t]
    \centering
    \renewcommand{\arraystretch}{1.2}
    \resizebox{\textwidth}{!}{
    \begin{tabular}{l c cccccc c}
        \toprule
        \textbf{Method} & \textbf{Avg. t (s)}$\downarrow$ & \textbf{Two Obj.} & \textbf{Colors} & \textbf{Attribute} & \textbf{Single Obj.} & \textbf{Position} & \textbf{Counting} & \textbf{Overall $\uparrow$} \\
        \cmidrule(lr){1-9}

        Lumina-DiMOO & 57.01 & \textbf{93.94} & \textbf{91.49} & 73.00 & ~~97.50 & 79.00 & \underline{85.00} & 86.66 \\
        \cmidrule(lr){1-9}
        ~~~~+ ~ML-Cache    & \underline{31.95} & \underline{93.75} & 89.63 & \textbf{75.50} & \textbf{100.00} & \underline{84.50} & \textbf{85.94} & \textbf{87.83} \\
        ~~~~+ ~Prophet & 32.15 & 91.16 & 86.44 & 70.75 & ~~96.56 & 74.50 & 84.06 & 83.91 \\
        ~~~~+ ~\textbf{LADR(Ours)}   & \textbf{13.22} & 91.41 & \underline{91.22} & \underline{74.75} & ~~\underline{99.06} & \textbf{85.50} & 81.88 & \underline{87.30} \\
        \bottomrule
    \end{tabular}
    }
    \caption{Performance comparison on the \textbf{GenEval}~\cite{ghosh2023geneval}.
    }
    \label{tab:geneval_results}
\end{table*}

\begin{table*}[!t]
    \centering
    \renewcommand{\arraystretch}{1.2}
    \resizebox{\linewidth}{!}{
    \begin{tabular}{l ccccccccccc}
        \toprule
        \textbf{Method}& \textbf{Avg. t (s)}$\downarrow$ & \textbf{Style} & \textbf{Know.} & \textbf{Attr.} & \textbf{Action} & \textbf{Rel.} & \textbf{Cmp.} & \textbf{Gram.} & \textbf{Logic.} & \textbf{Lay.} & \textbf{Text} \\
        \cmidrule(lr){1-12}
        Lumina-DiMOO & 57.21 & \underline{91.52} & \textbf{89.87} & \underline{79.29} & 71.48 & 78.55 & 73.45 & \textbf{69.79} & 43.58 & \textbf{85.63} & \textbf{27.87}  \\
        \cmidrule(lr){1-12}
        ~~~~~+ ~ML-Cache & \underline{31.96} & 91.40 & \underline{88.77} & 79.17 & \underline{73.67} & \underline{79.06} & \underline{74.61} & \underline{69.65} & \textbf{45.87} & 83.96 & 26.15 \\
        ~~~~~+ ~Prophet & 32.47 & 87.40 & 84.34 & 75.64 & 68.25 & 75.63 & 65.85 & 64.97 & 39.91 & 83.02 & 25.57 \\
        ~~~~~+ ~LADR(\textbf{Ours}) & \textbf{13.94}  & \textbf{94.80} & 88.61 & \textbf{81.73} & \textbf{75.95} & \textbf{81.98} & \textbf{77.71} & 66.31 & \underline{45.41} & \underline{85.26} & 16.38 \\
        \bottomrule
    \end{tabular}
    }
    \caption{
    Performance comparison on \textbf{UniGen-Bench}~\cite{wang2025pref}.
    }
    \label{tab:unibench_results}
\end{table*}

\begin{table*}[!t]
    \centering
    \scriptsize
    \renewcommand{\arraystretch}{1.2}
    \resizebox{0.99\linewidth}{!}{%
    \begin{tabular}{l c cccccc c}
        \toprule
        \textbf{Method} & \textbf{Avg. t (s)} $\downarrow$ & \textbf{Color} & \textbf{Shape} & \textbf{Texture} & \textbf{Spatial} & \textbf{Non-spatial} & \textbf{Complex} \\
        \cmidrule(lr){1-8}
        Lumina-DiMOO & 56.92  & 81.07  & 57.02  & 69.30  & \underline{46.70}  & \underline{31.70}  & 34.98 \\
        \cmidrule(lr){1-8}
        ~~~~~+ ~ML-Cache & \underline{31.78}  & \underline{81.52}  & \underline{57.75}  & 70.28 & \textbf{46.79} & \textbf{31.83} & \underline{35.23}  \\
        ~~~~~+ ~Prophet & 32.43 & 80.92 & 55.48 & \underline{70.34} & 42.37 & 31.46 & 34.92 \\
        ~~~~~+ ~LADR(\textbf{Ours}) & \textbf{13.41} & \textbf{82.25} & \textbf{58.94} & \textbf{72.34}  & 46.35 & 31.60 & \textbf{36.19}  \\
        \bottomrule
    \end{tabular}
    }
    \caption{Performance Comparison on \textbf{T2I-CompBench}~\cite{huang2023t2i}.
     }
    \label{tab:t2i_compbench}
\end{table*}

\begin{table*}[!t]
    \centering
    \small
    \renewcommand{\arraystretch}{1.2}
    \resizebox{0.95\linewidth}{!}{
    \begin{tabular}{l c cccccc}
        \toprule
        \textbf{Method} & \textbf{Avg. t (s)} $\downarrow$ & \textbf{Global} & \textbf{Entity} & \textbf{Attribute} & \textbf{Relation} & \textbf{Other} & \textbf{Overall} $\uparrow$ \\
        \cmidrule(lr){1-8}
        Lumina-DiMOO & 58.16 & 77.20 & 90.36 & 87.93 & 93.04 & 82.80 & 83.61 \\
        \cmidrule(lr){1-8}
        ~~~~~+ ~ML-Cache   & \underline{32.01} & 81.46 & \underline{90.37} & \underline{88.16} & \underline{93.27} & \textbf{84.40} & \underline{84.02} \\
        ~~~~~+ ~Prophet & 34.63  & \underline{81.76} & 89.56 & 87.33 & 92.76 & \underline{83.20} & 82.91 \\
        ~~~~~+ ~LADR(\textbf{Ours})        & \textbf{14.52}     & \textbf{84.19} & \textbf{91.47} & \textbf{89.12} & \textbf{94.20} & 81.20 & \textbf{85.42} \\
        \bottomrule
    \end{tabular}
    }
    \caption{Performance evaluation on \textbf{DPG-Bench}~\cite{hu2024ella}.
    }
    \label{tab:dpg_bench}
\end{table*}

We empirically investigated the effectiveness of our proposed accelerated method LADR by answering two critical research questions: (1) \textit{Does the method deliver substantial speedups compared to existing caching and heuristic strategies?} (2) \textit{Can it maintain or even enhance generative fidelity in some scenarios?}

\paragraph{Decoding Efficiency Analysis.}
The primary motivation of our approach is to alleviate the computational bottleneck of parallel iterative decoding. As presented in Tables 1 through 4, our method demonstrates a dramatic reduction in inference latency across all benchmarks.
On average, our accelerated model completes inference in approximately $\sim$13-14 seconds, representing a $\mathbf{4\times}$ speedup over the No-Cache ($\sim$57s) and a $\mathbf{2\times}$ speedup over the optimized ML-Cache and Prophet ($\sim$32s). Notably, our method outperforms the text-optimized Prophet algorithm, confirming that our locality-aware rescue strategy is inherently more suitable for the 2D visual domain than heuristics transplanted from 1D text generation.

\paragraph{Generative Quality Analysis.}

Beyond efficiency, our results indicate that the significant reduction in sampling steps does not come at the cost of visual fidelity, while it yields a highly competitive performance profile across diverse benchmarks.
Conversely, we note a performance drop in fine-grained high-frequency tasks, such as the text rendering score in UniGen-Bench, suggesting that the model remains sensitive to the reduction of iterative refinement steps in some scenarios. To further investigate this, we conducted an additional experiment on UniGen-Bench's text category exploring a delayed rescue strategy. Specifically, we completely disabled the rescue operation during the Exploration Phase ($t_{eff} < 0.2$) and kept LADR’s other settings unchanged. As illustrated in Table~\ref{tab:delayed}, the text rendering score recovers to 26.10, which is comparable to the base model's 27.87. Meanwhile, the inference time is 22.51s, still maintaining a highly efficient 2.5x speedup. This demonstrates that LADR could achieve a highly adaptable efficiency-quality trade-off.
Overall, the experiment results demonstrate that our method achieves a superior efficiency-fidelity trade-off, delivering efficient decoding speeds while preserving robust generative capabilities in image generation.

\begin{table}[h]
\centering
\resizebox{0.98\linewidth}{!}{
\begin{tabular}{lcc}
\toprule
\textbf{Method} & {\textbf{Avg.t (s) $\downarrow$}} & {\textbf{Text}} \\
\midrule
Lumina-DiMOO             & 57.21 & 27.87 \\
LADR                     & 13.94 & 16.38 \\
LADR(delayed rescue)    & 22.51 & 26.10 \\
\bottomrule
\end{tabular}}
\caption{Efficiency and performance comparison of different strategies on the text category of UniGen-Bench.}
\label{tab:delayed}
\end{table}

\paragraph{Case Visualization.}  To intuitively assess the impact of acceleration on perceptual quality, 
We visualized some cases generated by our proposed LADR method alongside the base model and two training-free baselines. As shown in  Figure~\ref{fig-4:case}, our approach achieves a significant speedup ($ 4\times$ faster than the backbone) while preserving intricate visual details (\eg reflections in the "wooden boats") and correct semantic composition (\eg  "red cup and pink handbag"), demonstrating that our spatial-aware acceleration could maintain the generative quality of the underlying model.

\begin{figure}[!t]
    \centering
    \includegraphics[width=0.99\linewidth]{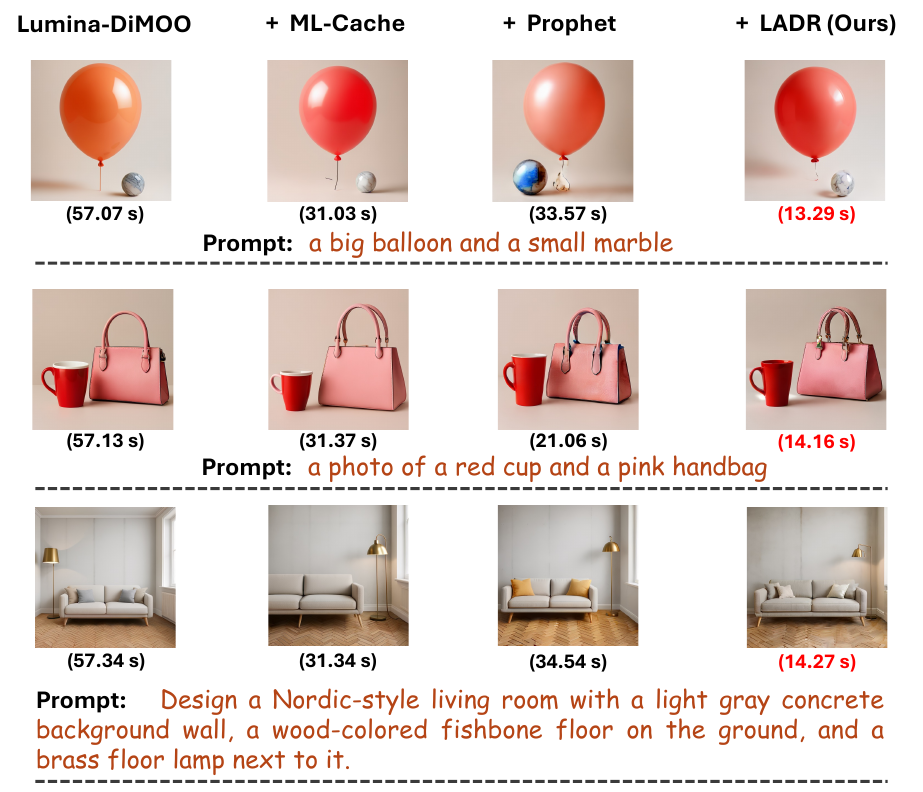}
    \caption{Qualitative comparison of different methods in terms of generation fidelity and inference time, where the corresponding text prompt is provided below each row, with the inference latency displayed in parentheses under the image. }
    \label{fig-4:case}
\end{figure}

\subsection{Ablation Studies and Analysis}
\paragraph{Impact of Spatial Selection Strategy.}

To strictly validate our hypothesis that spatial locality is the critical factor for acceleration, we conducted an ablation study on the token selection criteria. Specifically, we first determined the counts $k$ of rescued neighbor tokens via LADR at each timestep, and then enforced this exact budget on two strategies:

\begin{itemize}
    \item \textbf{Non-Neighbor Prioritization.} This strategy prioritizes isolated tokens with the top-$k$ confidence gaps, only reverting to neighbors if the non-neighbor set is exhausted.
    \item \textbf{Random Selection.} This strategy randomly sampled from the remask token, no better neighbor or non-neighbor tokens.
\end{itemize}

Figure \ref{fig:neighbor-non} presents the quantitative comparison on the GenEval benchmark.
We observe that the Non-Neighbor strategy yields the lowest performance (Overall score: 84.05), significantly lagging behind our method, which achieves an overall score of 87.30. This confirms that forcing the model to resolve isolated tokens early, even those with high confidence gaps, leads to error propagation, as these predictions may lack sufficient spatial grounding.
Interestingly, the random strategy outperforms the Non-Neighbor variant, but it is still lagging behind our proposed accelerated method.
This indicates that our locality-aware dynamic rescue strategy provides the optimal balance, ensuring that the accelerated decoding trajectory respects the structural dependencies of the image.

\paragraph{Effectiveness of Confidence Margin.}

Besides the necessity of spatial locality, we further investigate the optimality of the ranking metric used to filter these spatial candidates. To verify whether the \textit{Confidence Margin} (Top1-Top2 gap) provides a superior signal compared to standard confidence scores, we evaluate distinct prioritization criteria for selecting the rescued tokens $\mathcal{R}_{t}$ in eq~(\ref{eq10}):
\begin{itemize}
    \item \textbf{Standard Confidence (Top-1 Probability).} This variant ranks neighbors solely by the probability of the most likely token.
    \item \textbf{Random Neighbor Selection.} This baseline selects tokens stochastically from the neighborhood $\mathcal{C}_t$ in eq~(\ref{eq9}), ignoring predictive certainty entirely.
\end{itemize}

Figure~\ref{fig:conf} reports the performance on the GenEval benchmark.
The results demonstrate that our confidence margin strategy achieves the highest overall accuracy (87.30), surpassing the standard confidence baseline (85.24).
While the standard confidence approach performs strongly in object-centric metrics like \textit{Counting} (83.00), it underperforms in structural categories such as \textit{Position} (80.25 vs. 85.50) and \textit{Attribute} (71.25 vs. 74.75).
The Random neighbor selection yields the lowest overall performance (83.74).
These findings suggest that the Top1-Top2 gap is a more robust discriminator to recover tokens. It effectively penalizes ambiguously high predictions, where the model is confident in the top choice but equally confident in a competing alternative, thereby preventing the premature fixation of semantically unstable tokens.

\begin{figure}[t]
    \centering
    \includegraphics[width=\linewidth]{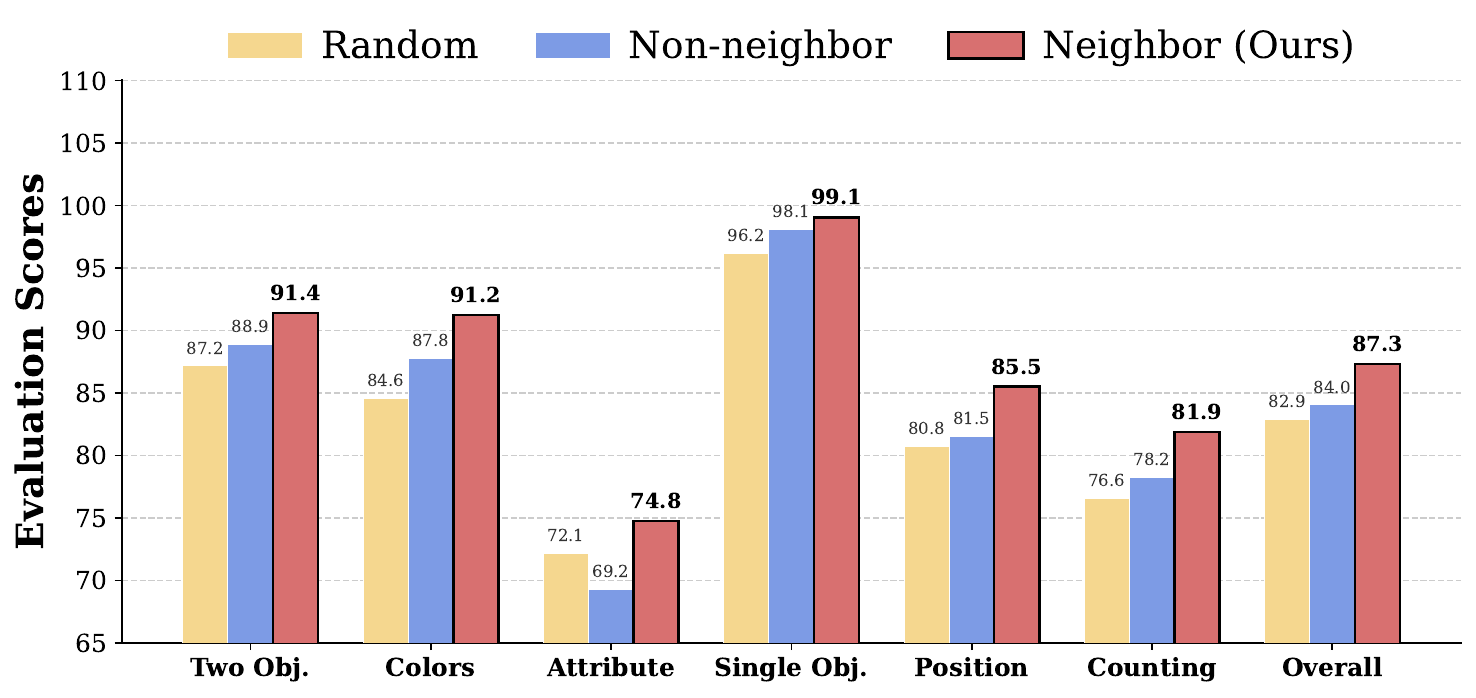}
    \caption{Ablation study of spatial selection strategies on the GenEval benchmark.``Random'' means \textit{Random Selection}, ``Non-neighbor'' represents \textit{Non-Neighbor Prioritization}, and ``Neighbor (Ours)'' is our strategy that rescues neighbor tokens.}
    \label{fig:neighbor-non}
\end{figure}

\begin{figure}[t]
    \centering
    \includegraphics[width=\linewidth]{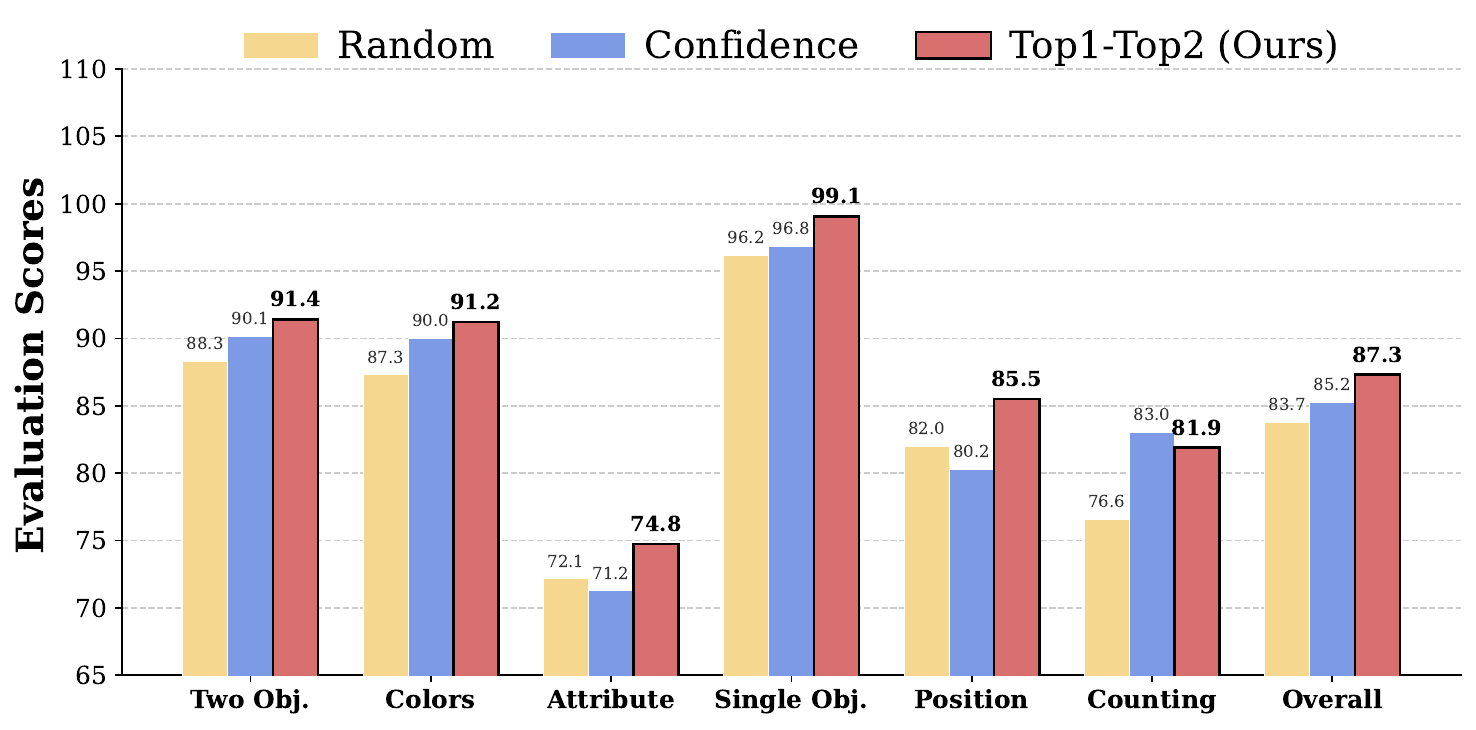}
    \caption{Ablation study about different ranking metrics of neighbor tokens on the GenEval benchmark.``Random'' means \textit{Random Neighbor Selection}, ``Confidence'' denotes \textit{Standard Confidence}, and ``Top1-Top2 (Ours)'' means confidence margin we employed. }
    \label{fig:conf}
\end{figure}

\section{Conclusion and Future Work}
In this paper, we propose an accelerated parallel decoding strategy called LADR, which is a training-free method designed to unlock the inference efficiency of DLMs. By challenging the standard schedule generation difficulty, LADR exploits the intrinsic spatial locality of visual data. It dynamically rescues high-confidence tokens within resolved neighborhoods using a lightweight confidence margin, employing an inverse scheduling mechanism to adaptively re-align the generation timeline. Extensive evaluations across four publicly popular text-to-image generation benchmarks demonstrate that our method achieves a superior efficiency-fidelity trade-off. It delivers a significant $4\times$ speedup over non-cached baselines and $2\times$ speedup over heuristic-based methods, without model re-training or architectural modifications. Our findings underscore that while text-optimized heuristics provide a foundation, optimal acceleration in the visual domain requires strategies that explicitly respect the 2D spatial structure of the modality, paving the way for plug-and-play DLMs.

\paragraph{Future Work.}
While LADR demonstrates strong efficiency–quality trade-offs for text-to-image diffusion, several directions remain open for future exploration. First, extending locality-aware rescue to temporally structured modalities such as video generation may require jointly modeling spatial and temporal frontiers, where locality spans both space and time. Second, we anticipate that integrating LADR with emerging architectural optimizations (e.g., sparse attention or lightweight distillation) may yield complementary gains, pushing DLMs closer to real-time multimodal generation.

\section*{Limitations}
While LADR demonstrates the potential of exploiting image spatial locality for acceleration of parallel decoding, our current method still has some limitations. The implementation relies on empirically determined hyperparameters, such as the confidence threshold $\tau$ and rescue ratios $\alpha$. These values were selected to validate the core hypothesis that spatial neighbors facilitate faster convergence rather than to locate the global optimum. Furthermore, as a training-free acceleration method, LADR’s performance is influenced by the foundational backbone. To prevent unfaithful generation, existing post-hoc alignment frameworks can be integrated to ensure prompt-image faithfulness. 
\bibliography{custom}

\clearpage
\appendix
\section{Theoretical Proofs and Derivations}
\label{sec:appendix_proofs}

In this section, we provide the detailed mathematical derivations for the propositions and theorems presented in the main methodology.

\subsection{Proof of Theorem \ref{thm:risk_bound} (Margin-based Error Bound)}
\label{proof:thm1}

\textbf{Problem Statement:} 
Let $\mathbf{p} = [p_1, p_2, \dots, p_K]$ be the probability distribution over $K$ classes, sorted such that $p_{(1)} \ge p_{(2)} \ge \dots \ge p_{(K)}$. The predicted class is $\hat{y} = \text{argmax}_k p_k$. The probability of error is $P(\mathcal{E}) = 1 - p_{(1)}$. Given the margin constraint $p_{(1)} - p_{(2)} \ge \tau$, we seek the upper bound of $P(\mathcal{E})$.

\begin{proof}
We aim to find the upper bound for the probability of error $P(\mathcal{E}) = 1 - p_{(1)}$. This is strictly equivalent to finding the minimum possible value for the confidence $p_{(1)}$ subject to the given margin constraints.

1. \textbf{Define Constraints:}
   The probability distribution must sum to 1, and the margin condition must hold:
   \begin{align}
       p_{(1)} + p_{(2)} + \sum_{k=3}^K p_{(k)} &= 1, \label{eq:sum_one} \\
       p_{(1)} - p_{(2)} &\ge \tau \implies p_{(2)} \le p_{(1)} - \tau. \label{eq:margin}
   \end{align}

2. \textbf{Worst-Case Analysis for $p_{(1)}$:}
   To minimize $p_{(1)}$, we must maximize the probability mass assigned to the remaining classes, particularly the closest competitor $p_{(2)}$. The most extreme case occurs when the entire probability mass is concentrated in the top two classes, meaning $\sum_{k=3}^K p_{(k)} \approx 0$. 
   
   Under this worst-case assumption, we have:
   \begin{align}
       p_{(1)} + p_{(2)} \approx 1. \label{eq:top2_sum}
   \end{align}

3. \textbf{Deriving the Lower Bound of Confidence:}
   Substituting the margin constraint from Eq.~\eqref{eq:margin} into Eq.~\eqref{eq:top2_sum}, we obtain:
   \begin{align}
       1 &= p_{(1)} + p_{(2)} \nonumber \\
         &\le p_{(1)} + (p_{(1)} - \tau) \nonumber \\
         &= 2p_{(1)} - \tau.
   \end{align}
   Rearranging the inequality gives the lower bound for $p_{(1)}$:
   \begin{align}
       2p_{(1)} &\ge 1 + \tau \nonumber \\
       p_{(1)} &\ge \frac{1 + \tau}{2}. \label{eq:p1_lower_bound}
   \end{align}

4. \textbf{Calculating the Error Bound:}
   The probability of error is the complement of the top-1 probability. Using Eq.~\eqref{eq:p1_lower_bound}, we bounded the error as:
   \begin{align}
       P(\mathcal{E}) &= 1 - p_{(1)} \nonumber \\
       &\le 1 - \frac{1 + \tau}{2} \nonumber \\
       &= \frac{1 - \tau}{2}.
   \end{align}
   
   This concludes the proof. The error is strictly bounded by a linear function of the threshold $\tau$.
\end{proof}

\subsection{Justification of Proposition~\ref{prop:entropy} (Locality-Induced Information Gain)}
\label{proof:lemma1}

\textbf{Proposition Statement:} The mutual information $I(z_i; \mathbf{z}_{\mathcal{N}(i)})$ dominates $I(z_i; \mathbf{z}_{\mathcal{S}_{dist}})$.

\begin{proof}
We derive this property from the definition of Conditional Mutual Information and the Markov property of Convolutional Neural Networks (CNNs).

1. \textbf{Definition of Information Gain:}
   Let $\Omega$ be the set of all tokens. The information gain for a target token $z_i$ given a subset $\mathcal{S}$ is:
   \begin{align}
       IG(\mathcal{S}) &= I(z_i; \mathcal{S} | \Omega \setminus (\{z_i\} \cup \mathcal{S})) \nonumber \\
       &= H(z_i | \Omega \setminus (\{z_i\} \cup \mathcal{S})) - H(z_i | \Omega \setminus \{z_i\}).
   \end{align}

2. \textbf{Spatial Correlation Decay:}
   For latent codes $z$ derived from a VQ-GAN encoder $E$, the covariance between features at spatial locations $u$ and $v$ generally follows a decay function dependent on Euclidean distance $d(u, v)$:
   \begin{align}
       \text{Cov}(z_u, z_v) \propto \exp\left(-\frac{d(u, v)^2}{2\sigma^2}\right),
   \end{align}
   where $\sigma$ corresponds to the Effective Receptive Field (ERF).

3. \textbf{Entropy and Correlation:}
   For Gaussian-like distributions, the conditional entropy is related to the correlation coefficient $\rho$:
   \begin{align}
       H(z_i | z_j) \approx \frac{1}{2} \log(1 - \rho_{ij}^2) + \text{const}.
   \end{align}
   Higher correlation $\rho_{ij}$ leads to lower conditional entropy $H(z_i | z_j)$.

4. \textbf{Comparing Neighbors vs. Distant Tokens:}
   Let $j \in \mathcal{N}(i)$ be a spatial neighbor and $k \in \mathcal{S}_{dist}$ be a distant token.
   \begin{align}
       d(i, j) &\ll d(i, k) \\
       \Rightarrow \rho_{ij} &> \rho_{ik} \\
       \Rightarrow H(z_i | z_j) &< H(z_i | z_k).
   \end{align}

5. \textbf{Conclusion:}
   Since observing neighbors reduces the conditional entropy more than observing distant tokens:
   \begin{align}
       I(z_i; \mathbf{z}_{\mathcal{N}(i)}) > I(z_i; \mathbf{z}_{\mathcal{S}_{dist}}).
   \end{align}
   Thus, the optimal strategy for variance reduction is to prioritize $\mathcal{N}(i)$.
\end{proof}

\subsection{Derivation of Inverse Scheduling (Proposition~\ref{prop:manifold})}
\label{proof:prop1}

\textbf{Objective:} Find the effective timestep $t_{new}$ such that the model's training distribution matches the current observation density.

\begin{proof}
1. \textbf{Forward Process Definition:}
   The masking probability at time $t$ is given by the schedule function $\gamma(t)$:
   \begin{align}
       q(z_{t,i} = \texttt{[MASK]}) = \gamma(t).
   \end{align}

2. \textbf{Expected Mask Ratio:}
   For a sequence of length $N$, the number of masked tokens $M_t$ follows a Binomial distribution. The expected mask ratio is:
   \begin{align}
       \mathbb{E}\left[\frac{|\mathcal{M}_t|}{N}\right] = \gamma(t).
   \end{align}

3. \textbf{Perturbation via Rescue:}
   The LADR algorithm unmasks a set of tokens $\mathcal{R}$, changing the actual mask ratio to $\rho_{act}$:
   \begin{align}
       \rho_{act} = \frac{|\mathcal{M}_{prev}| - |\mathcal{R}|}{N}.
   \end{align}

4. \textbf{Manifold Alignment:}
   To ensure the input to the denoiser $p_\theta(\mathbf{z}_{0} | \mathbf{z}_{t_{new}})$ is In-Distribution (ID), we require the expected mask ratio at $t_{new}$ to equal the actual current ratio:
   \begin{align}
       \gamma(t_{new}) &= \rho_{act}.
   \end{align}

5. \textbf{Solving for Timestep:}
   Assuming $\gamma(t)$ is monotonic and invertible (e.g., cosine schedule), we apply the inverse function:
   \begin{align}
       t_{new} = \gamma^{-1}(\rho_{act}).
   \end{align}
   This creates the mapping required for the Manifold Consistent Inverse Scheduling.
\end{proof}

\section{Extended Related Work}
\label{app:related_work}

In this section, we provide a detailed elaboration on the development of Masked Discrete Diffusion for image generation and the current landscape of acceleration strategies.

\subsection{Masked Discrete Diffusion for Image Generation}

Discrete Diffusion Language Models (DLMs)~\cite{sahoo2024simple,nie2025large, song2025seed, arriola2025block} have reformulated the generation process as masked modeling within a discretized vector-quantized (VQ) space.
Pioneered by MaskGIT~\cite{chang2022maskgit}, this paradigm utilizes a bidirectional Transformer coupled with a mask-scheduling strategy to enable image synthesis via iterative parallel decoding. Compared to standard continuous diffusion models~\cite{ho2020denoising,rombach2022high}, this formulation significantly curtails the required sampling steps. 
Building upon this foundation, subsequent architectures have rapidly expanded the field: Paella~\cite{rampas2022novel} optimized U-Net backbones with noise-robust objectives, while Muse~\cite{chang2023muse} demonstrated scalability by integrating pre-trained LLMs for enhanced semantic control.

More recently, the field has witnessed a shift towards unified multimodal understanding and generation~\cite{you2025llada,swerdlow2025unified,xin2025lumina,li2025lavida,li2025lavida1,zhou2025draw}. Notably, frameworks like Lumina-DiMOO~\cite{xin2025lumina} and LaVida-O~\cite{li2025lavida} adopt a generalized discrete diffusion approach that treats visual and textual tokens as a shared sequence. While facilitating versatile generative capabilities across modalities, the iterative mask recovery process still imposes a non-negligible computational overhead. This underscores the necessity for efficient acceleration strategies that can expedite inference without compromising generative integrity.

\subsection{Acceleration of Masked Discrete Diffusion}

The efficacy of discrete diffusion~\cite{wang2025alternate} hinges on iterative refinement, where multiple forward passes resolve the joint distribution of tokens. Unlike autoregressive models that benefit from causal masking and KV-caching~\cite{li2024snapkv, bai2023qwen, guo2025deepseek,cai2024pyramidkv}, masked diffusion relies on bidirectional attention with dynamically shifting mask states~\cite{sahoo2024simple, xin2025lumina}. This characteristic precludes the reuse of historical computations, creating a distinct latency bottleneck.

\paragraph{Distillation-Based Approaches.}
To alleviate latency, research has gravitated towards model distillation~\cite{hinton2014distilling, SongD0S23, deschenaux2025beyond,yin2024one,hayakawa2025distillation}. While Consistency Models~\cite{SongD0S23,kim2024consistency} are effective in continuous pixel space, adapting them to discrete VQ space requires specialized formulations due to the absence of explicit ODE trajectories~\cite{zhu2025di}. Works such as DiMO~\cite{zhu2025di} and Soft-DiMO~\cite{zhu2025soft} address this using policy gradients and soft embedding relaxations to compress multi-step trajectories. However, these methods necessitate computationally expensive re-training and student-teacher alignment, limiting their plug-and-play applicability.

\paragraph{Training-Free Heuristics.}
Advancements in DLMs have also explored architectural optimizations~\cite{wu2025fast2, hu2025accelerating, wu2025fast,wang2025diffusion} and adaptive sampling~\cite{li2025diffusion, israel2025accelerating} primarily for text generation.
While effective for 1D sequences, their direct adaptation to the visual domain is non-trivial. The inherent gap between the sequential dependencies of text and the 2D spatial correlations of images renders text-optimized heuristics suboptimal for visual generation. This discrepancy highlights the need for acceleration strategies explicitly tailored to the spatial redundancy and structural properties of images.

\end{document}